\newtheorem{theorem}{Theorem}[section]
\newtheorem{proposition}[theorem]{Proposition}
\newtheorem{algorithm}{Algorithm}
\renewcommand{\vec}[1]{\mathbf{#1}}
\newcommand{\mat}[1]{\mathbf{#1}}
\newcommand{\set}[1]{\mathcal{#1}}
\newcommand{\timeidx}{n}
\newcommand{\lagidx}{m}
\newcommand{\lagvar}{\lagidx}
\newcommand{\ACF}{\mathbf{R}}
\newcommand{\SDM}{\mathbf{S}}
\newcommand{\invSDM}{\mathbf{K}}
\newcommand{\SDMhat}{\widehat{\mathbf{S}}}
\newcommand{\GLASSO}{\widehat{\mathbf{K}}}
\newcommand{\GLASSOERR}{\mathbf{\Delta}}
\newcommand\defeq{:=}
\newcommand{\Exp}{\operatorname{E}}
\newcommand{\diag}{\operatorname{diag}}
\newcommand{\freqbin}{f}
\newcommand{\sparsity}{s}
\newcommand{\ESDM}{\widehat{\mathbf{S}}}  
\newcommand{\optvar}{\mathbf{X}}
\newcommand{\secondvar}{\mathbf{Z}}
\newcommand{\dualvar}{\mathbf{U}}
\DeclareMathOperator*{\argmin}{argmin}
\DeclareMathOperator*{\vectorize}{vec}
\DeclareMathOperator{\gsupp}{gsupp}
\newcommand{\coefflen}{p}
\newcommand{\samplesize}{N}
\newcommand{\nrfreqs}{F}
\newcommand{\vx}{\mathbf{x}}  
\newcommand{\cig}{\mathcal{G}_x}  
\DeclareMathOperator*{\PR}{P}
\newtheorem{thm}{Theorem}[section]
\newtheorem{assumption}[thm]{Assumption}
\title{Graphical LASSO based Model Selection for Time Series}
\name{Alexander Jung, Gabor Hannak and Norbert G{\"o}rtz}
\address{\small Institute of Telecommunications, Vienna University of Technology\\[-1mm]
\small Gusshausstrasse 25/389, 1040 Vienna, Austria; %\\%phone: +43 1 58801 38921; 
\small email: \{ajung,ghannak,norbert.goertz\}@nt.tuwien.ac.at}
\begin{document}
%\ninept
%
\maketitle
\begin{abstract}

We propose a novel graphical model selection scheme 
for high-dimensional stationary time series or discrete time processes. The method is based on a natural generalization of 
the graphical LASSO algorithm, introduced originally for the case of i.i.d.\ samples, and estimates the conditional 
independence graph of a time series from a finite length observation. 
The graphical LASSO for time series is defined as the solution of an $\ell_{1}$-regularized maximum (approximate) likelihood problem. 
We solve this optimization problem using the alternating direction method of multipliers. 
Our approach is nonparametric as we do not assume a finite dimensional parametric model, but only require the process to be sufficiently smooth in the spectral domain. 
For Gaussian processes, we characterize the performance of our method theoretically by deriving an upper bound on 
the probability that our algorithm fails. Numerical experiments demonstrate 
the ability of our method to recover the correct conditional independence graph from a limited amount of samples. 

\end{abstract}
\begin{keywords}
Sparsity, graphical model selection, graphical LASSO, nonparametric time series, ADMM
\end{keywords}

\vspace*{-2mm}
\section{Introduction}\label{sec:intro}
\vspace*{-1mm}
We consider the problem of inferring the conditional independence graph (CIG) of 
a stationary high-dimensional discrete time process or time series $\vx[\timeidx]$ from observing $\samplesize$ samples $\mathbf{x}[1],\ldots,\mathbf{x}[\samplesize]$. This problem is referred to as graphical model selection (GMS) 
and of great practical interest, e.g., for gene analysis where the process $\vx[\timeidx]$ represents measurements of gene expression levels and the 
CIG gives insight into the dependencies between different genes \cite{Gohlke2008,DavidsonLevin2005}.

%In particular, we are interested in the high-dimensional regime, where the sample size $\samplesize$ may be significantly smaller than the maximum potential number of edges. 
%Clearly, in order to allow for accurate GMS schemes in this high-dimensional setting, we have to require some additional problem structure. In this work, we focus on the structure 
%induced by requiring the true CIG being a sparse graph. A sparse graph contains relatively few edges compared to the fully connected graph, which contains $\coefflen(\coefflen\!-\!%1)/2$ edges. 

%{\it Existing Work:} Most approaches to GMS for time series 
%are based on finite dimensional parametric models, e.g., 
%vector-valued autoregressive (VAR) models \cite{Nowak2011,Bento2010,Songsiri09,songsiri2010}. 
A first nonparametric GMS method for high-dimensional time series has been proposed recently \cite{JuHeck2014}. 
This approach is based on performing neighborhood regression (cf.Ê\cite{MeinBuhl2006}) in the frequency domain. 
%Within neighborhood regression, one estimates the neighborhood of each node separately via solving 
%a sparse linear regression problem. For the case of i.i.d.\ samples, an alternative to neighborhood regression is to use a regularized maximum likelihood 
%approach yielding the \textit{graphical least absolute shrinkage and selection operator} (gLASSO) algorithm \cite{RavWainRaskYu2011}. 

In this paper, we present an alternative nonparametric GMS scheme for time series based on 
generalizing the graphical LASSO (gLASSO) \cite{FriedHastieTibsh2008,BuhlGeerBook,WittenFriedSimon2011} to stationary time series. % to the case of correlated observations taken from a stationary discrete time process. %The generalized gLASSO solves a $\ell_{1}$-regularized 
%(approximate) maximum likelihood problem for stationary Gaussian processes. 
The resulting algorithm is implemented using the alternating direction method of multipliers (ADMM) \cite{DistrOptStatistLearningADMM}, 
for which we derive closed-form update rules. % for solving this convex optimization problem and derive the corresponding update rules. 

While algorithmically our approach is similar to 
the joint gLASSO proposed in \cite{DanaherGroupGLASSO2014}, the deployment of a gLASSO type algorithm for GMS of time series seems to be new. %Moreover, in contrast to \cite{DanaherGroupGLASSO2014}, we consider complex-valued gLASSO. 

Our main analytical contribution is a performance analysis which yields an upper bound on the probability that our scheme fails to correctly identify the CIG. The effectiveness of our GMS method is also verified by means of numerical experiments.

%The rest of this paper is organized as follows. The problem setup is detailed in Section \ref{sec:problem_formulation}, where we formulate the key assumptions 
%we placed on the observed process. The novel GMS scheme based on a gLASSO for time series, along with an ADMM implementation and 
%theoretical performance analysis is presented in Section \ref{sec:graphical_lasso}. The results of numerical experiments can be found in Section \ref{sec:results}.

\emph{Notation.} 
Given a natural number $\nrfreqs$, we define $[\nrfreqs] \defeq \{1,\ldots,\nrfreqs\}$. 
For a square matrix $\mathbf{X} \!\in\! \mathbb{C}^{\coefflen \times \coefflen}$, we denote by $\bar{\mathbf{X}}$, $\mathbf{X}^{H}$, $\rm{tr}\{ \mathbf{X} \}$ and $|\mathbf{X}|$ its 
elementwise complex conjugate, its Hermitian transpose, its trace and its determinant, respectively. We also need the matrix norm $\| \mathbf{X} \|_{\infty} \defeq \max_{i,j} |X_{i,j}|$. 
By writing $\mathbf{X} \succeq \mathbf{Y}$ we mean that $\mathbf{Y} - \mathbf{X}$ is a positive-semidefinite (psd) matrix. 

We denote by $\mathcal{H}_{\coefflen}^{[\nrfreqs]}$ the set of all length-$\nrfreqs$ sequences $\optvar[\cdot] \!\defeq\! \big(\optvar[1],\ldots,\optvar[\nrfreqs]\big)$ 
with Hermitian matrices $\optvar[\freqbin] \!\in\! \mathbb{C}^{\coefflen \times \coefflen}$. For a sequence $\optvar[\cdot] \!\in\! \mathcal{H}_{\coefflen}^{[\nrfreqs]}$, we define 
$\| X_{i,k}[\cdot] \|^2 \!\defeq\! (1/\nrfreqs)   \sum_{\freqbin \in \nrfreqs} |X_{i,j}[\freqbin]|^{2}$, 
its squared generalized Frobenius norm $\|  \optvar[\cdot] \|^2_{\rm{F}} \defeq \sum_{i,j} \| X_{i,k}[\cdot] \|^2$ and its $\ell_{1}$-norm as $\|\optvar\|_{1}\defeq \sum_{i,j} \| X_{i,j}[\cdot] \|$.
We equip the set $\mathcal{H}_{\coefflen}^{[\nrfreqs]}$ with the inner product 
$\langle \mathbf{A}[\cdot],\mathbf{B}[\cdot] \rangle \defeq (1/\nrfreqs) \sum_{\freqbin \in [\nrfreqs]} {\rm tr} \{ \mathbf{A}[\freqbin] \mathbf{B}[\freqbin] \}$.

For a sequence $\optvar[\cdot] \in \mathcal{H}_{\coefflen}^{[\nrfreqs]}$ and some subset $\mathcal{S} \subseteq [\coefflen] \times [\coefflen]$, we 
denote by $\mathbf{X}_{\mathcal{S}}[\cdot]$ the matrix sequence which is obtained by, separately for each index $\freqbin \!\in\! [\nrfreqs]$, zeroing all entries of the matrix $\optvar[\freqbin]$ except those in $\mathcal{S}$. The generalized support of a sequence $\optvar[\cdot] \in \mathcal{H}_{\coefflen}^{[\nrfreqs]}$ is defined as 
$\gsupp(\optvar[\cdot]) \defeq \{ (i,j) \in [\coefflen] \times [\coefflen]: \big( \optvar[\freqbin] \big)_{i,j}Ê\neq 0 \mbox{ for some } \freqbin \in [\nrfreqs] \}$. We also use $a_{+} \defeq \max \{a,0\}$. 

\vspace*{-3mm}
\section{Problem Formulation}\label{sec:problem_formulation}
\vspace*{-2mm}

Consider a $\coefflen$-dimensional real-valued zero-mean stationary time series 
$\mathbf{x}[\timeidx]=(x_1[\timeidx],\ldots,x_p[\timeidx])^T$, for $\timeidx \in \mathbb{Z}$.
We assume its autocorrelation function (ACF) $\ACF [\lagidx] \defeq \Exp \left\{ \vec{x}[\lagidx] \vec{x}^{T}[0] \right\}$ to be absolutely summable ACF, i.e., where $\sum_{\lagvar=-\infty}^{\infty} \| \ACF[\lagvar] \| < \infty$, 
such that we can define the spectral density matrix (SDM) $\SDM(\theta)$ via a Fourier transform:
%and, assuming that the ACF is summable, its discrete Fourier transform (DFT),
%the spectral density matrix (SDM)
\vspace*{-2mm}
\begin{equation}
\label{equ_def_SDM_FT_ACF}
\SDM (\theta) \defeq \sum_{\lagidx=-\infty}^{\infty} \ACF [\lagidx] \exp(-j2\pi
\lagidx \theta) \,, \quad \theta \in [0,1) \,.
\vspace*{-2mm}
\end{equation}
We require the eigenvalues of the SDM to be uniformly bounded as
\vspace*{-2mm}
\begin{equation}
\label{equ_uniform_boundedness_eigvals}
L \leq \lambda_{i} \left(\SDM(\theta)\right) \leq U,
\vspace*{-2mm}
\end{equation}
where, without loss of generality, we will assume $L=1$ in what follows. The upper bound in \eqref{equ_uniform_boundedness_eigvals} is 
valid if the ACF is summable; the lower bound ensures certain Markov properties of the CIG \cite{LauritzenGM,PHDEichler,Dahlhaus2000}. 
%The specific value for the lower bound does not incur any loss of generality, since 
%for a process $\vx'[\timeidx]$ satisfying $L \leq \lambda_{\min} \left(\SDM_x(\freqidx)\right)$, we can base 
%our consideration equivalently on the rescaled process $\vx[\timeidx] = (1/\sqrt{L}) \vx'[\timeidx]$ which satisfies \eqref{equ_uniform_boundedness_eigvals}. 

Our approach is based on the assumption that the SDM is a smooth function. This smoothness will be quantified via certain ACF moments (cf.\ \eqref{equ_def_SDM_FT_ACF})
\vspace*{-1mm}
\begin{equation}
\label{equ_def_ACF_moment} 
\mu_{x}^{(h)} \defeq \sum_{\lagvar=-\infty}^{\infty} \| \ACF[\lagvar] \|_{\infty} h[\lagvar]. 
\vspace*{-2mm}
\end{equation} 
Here, $h[\lagvar]$ denotes a weight function which typically increases with $|\lagvar|$. 
For a process with sufficiently small moment $\mu_{x}^{(h)}$, thereby enforcing smoothness of the SDM, we are allowed to base our considerations on a 
discretized version of the SDM, given by $\SDM[\freqbin]=\SDM(\theta_{\freqbin})$, with $\theta_{\freqbin} \!\defeq\! (\freqbin\!-\!1)/\nrfreqs$, for $\freqbin\in[\nrfreqs]$. 
The number $\nrfreqs$ of sampling points is a design parameter which has to be chosen suitably large, compared to the ACF moment $\mu_{x}^{(h)}$ (cf. \cite[Lemma 2.1]{JuHeck2014}).

% In particular, using the
% Nyquist-Shannon sampling theorem, one can show that $\SDM_x(\freqidx)$ is
% completely described by $2\mmax$ equispaced samples on $[0,1)$:
% \begin{equation}
% \freqidx_i = \frac{f}{2\mmax}\,, \quad f = 0,\ldots,2\mmax-1 \,.
% \end{equation}
The CIG of a process $\vec{x}[\timeidx]$ is a simple undirected graph %\footnote{A graph is termed simple if it contains no multiple edges and no self loops \cite{LauritzenGM}.}
$\set{G}=(\set{V},\set{E})$ with node set $\set{V} \!= \![p] $. Each node $r \in \set{V}$ represents 
a single scalar component process $x_{r}[\timeidx]$. An edge between nodes $r$ and $r'$ is absent, i.e., $(r,r') \notin \set{E}$, if 
the component processes $x_{r}[\timeidx]$ and $x_{r'}[\timeidx]$ are conditionally independent given all remaining component processes \cite{Dahlhaus2000}.

If the process $\vx[\timeidx]$ is Gaussian, the CIG can be conveniently characterized via the process inverse SDM $\invSDM[\freqbin] \defeq \SDM^{-1}[\freqbin]$. More specifically, 
it can be shown that, for sufficiently small $\mu_{x}^{(h)}$ with $h[\lagvar] = |\lagvar|$,  \cite{Dahlhaus2000,JuHeck2014} 
\vspace*{-1mm}
\begin{equation}
\label{equ_charac_edge_set_inv_SDM}
(i,j) \in \set{E} \Longleftrightarrow (i,j) \in \gsupp(\invSDM[\cdot]). % \big( \invSDM[\freqbin]\big)_{i,j} = 0 \mbox{ for all } \freqbin \in [\nrfreqs]. 
\vspace*{-1mm}
\end{equation}  
Thus, the edge set $\set{E}$ of the CIG is determined by the generalized support of the inverse SDM $\invSDM[\freqbin]$, for $\freqbin \in [\nrfreqs]$.  

Our goal is to robustly estimate the CIG from a finite length observation, incurring unavoidable estimation errors. Therefore, we have to require that, in addition to \eqref{equ_charac_edge_set_inv_SDM}, the non-zero off-diagonal entries of $\invSDM[\freqbin]$ are sufficiently large. To this end, we define the process (un-normalized) minimum global partial spectral coherence as
\vspace*{-1mm}
\begin{equation} 
\label{equ_def_rho_x}
\rho_{x} \defeq\min_{(i,j) \in \set{E}} \big\| K_{i,j}[\cdot] \big\|.
\vspace*{-1mm}
\end{equation} 
For the analysis of our GMS scheme we require
\vspace*{-2mm}
\begin{assumption}
\label{apt_rho_min}
We have $\rho_{x} \geq \rho_{\emph{min}}$ for a known $\rho_{\emph{min}}>0$. 
\vspace*{-2mm}
\end{assumption}
\vspace*{-4mm}
Our approach to GMS in the high-dimensional regime exploits a specific problem structure induced by the assumption that the true CIG is sparse. 
\vspace*{-2mm}
\begin{assumption}  
The CIG of the observed process $\vx[\timeidx]$ is sparse such that $| \set{E}| \leq \sparsity$ for some small $\sparsity \ll  \coefflen(\coefflen-1)/2$.
%\vspace*{-2mm}
%\begin{equation} 
%\label{equ_sparsity_codition} 
%\end{equation}  
\end{assumption}
\vspace*{-2mm}
%The key advantage of sparsity based GMS is that it requires not the entire (typically high-dimensional) SDM but only small sub-matrices of the SDM 
%to be estimated accurately. 
The performance analysis of the proposed GMS algorithm requires to quantify the conditioning of SDM sub-matrices. 
In particular, we will use the following assumption which is a natural extension of the (multitask) compatibility condition, originally introduced 
in \cite{BuhlGeerBook} to analyze LASSO for the ordinary sparse linear (multitask) model. 
\vspace*{-2mm}
\begin{assumption}
\label{equ_asspt_compat_cond}
Given a process $\vx[\timeidx]$ whose CIG contains no more than $\sparsity$ edges indexed by  $\mathcal{S}\subseteq [\coefflen]\times[\coefflen]$, we assume that there exists a positive 
constant $\phi$ such that 
\vspace*{-2mm}
\begin{equation}
\label{label_equ_bound_compat_condition}
\frac{1}{\nrfreqs} \hspace*{-1mm}\sum_{\freqbin \in [\nrfreqs]} \hspace*{-2mm}\vectorize\{ \optvar[\freqbin] \}^{H} \! \big( \bar{\SDM}[\freqbin] \otimes \SDM[\freqbin] \big) \!\vectorize \{ \optvar[\freqbin] \} \!\geq\! \frac{\phi}{\sparsity} \| \optvar_{\mathcal{S}} \|^{2}_{1}
\vspace*{-2mm}
\end{equation} 
holds for all $\optvar[\cdot] \in \mathcal{H}_{\coefflen}^{[\nrfreqs]}$ with $\| \optvar_{\mathcal{S}^{c}} \|_{1} \leq 3 \|  \optvar_{\mathcal{S}}\|_{1}$. 
\end{assumption}
The constant $\phi$ in \eqref{label_equ_bound_compat_condition} is essentially a lower bound on the eigenvalues of small sub-matrices of the SDM. 
As such, the Assumption \ref{equ_asspt_compat_cond} is closely related to the concept of the restricted isometry property (RIP) \cite{GeerBuhlConditions}. 
%In fact, one can lower bound the constant $\phi$ in terms of the RIP constants \cite{GeerBuhlConditions}. For a detailed study of the relations between compatibility conditions, RIP 
%and other incoherence notions we refer to \cite{GeerBuhlConditions}, which also proves that, in some sense, compatibility conditions are the least restrictive \cite{GeerBuhlConditions}. 

It can be verified easily that Assumption \ref{equ_asspt_compat_cond} is always valid with $\phi \!=\! L$ for a process satisfying \eqref{equ_uniform_boundedness_eigvals}. However, 
for processes having a sparse CIG, we typically expect $\phi \gg L$.

\vspace*{-2mm}
\section{Graphical LASSO for Time Series}\label{sec:graphical_lasso}
\vspace*{-1mm}

The \textit{graphical least absolute shrinkage and selection operator} (gLASSO) \cite{FriedHastieTibsh2008,BuhlGeerBook,WittenFriedSimon2011,RavWainRaskYu2011} is
an algorithm for estimating the inverse covariance matrix $\invSDM \defeq \mathbf{C}^{-1}$ of a high-dimensional Gaussian random vector $\mathbf{x} \sim \mathcal{N}(\mathbf{0},\mathbf{C})$ based 
on i.i.d.\ samples. 
In particular, gLASSO is based on optimizing a $\ell_1$-penalized log-likelihood function and can therefore be interpreted as regularized maximum likelihood estimation.

\vspace*{-5mm}
\subsection{Extending gLASSO to stationary time series}
\vspace*{-1mm}
%From the interpretation of gLASSO as a $\ell_{1}$-regularized maximum likelihood estimator, 
%Since the gLASSO is based on the $\ell_{1}$ regularized log-likelihood function for  i.i.d.\ samples from a GMRF. 
%From this perspective, 
A natural extension of gLASSO to the case of stationary Gaussian time series is to replace the objective 
function for the i.i.d.\ case with the corresponding penalized log-likelihood function for a stationary Gaussian process. 
However, since the exact likelihood lacks a simple closed-form expression, we will use the popular ``Whittle-approximation'' \cite{BachJordan04,whittle53}, to arrive at 
the following gLASSO estimator for general stationary time series: 
%we propose the following discretized gLASSO estimator
\vspace*{-1mm}
\begin{equation}
\label{eq:graphical_lasso_discr_time_series}
\GLASSO[\cdot] \defeq \argmin_{\optvar[\cdot] \in \mathcal{C}} \!-A\{\optvar\} + \langle \SDMhat[\cdot],
\optvar[\cdot] \rangle + \lambda \| \optvar[\cdot]\|_1 
\vspace*{-1mm}
\end{equation}
with $A\{\optvar\} \defeq  (1/\nrfreqs) \sum_{\freqbin \in [\nrfreqs]} \log| \optvar[\freqbin]|$ and 
\begin{equation}
\label{equ_def_constraint_set_C}
\mathcal{C} \!\defeq\! \{ \optvar[\cdot] \in \mathcal{H}_{\coefflen}^{[\nrfreqs]}:  \mathbf{0} \prec \optvar[\freqbin] \preceq  \mathbf{I} \mbox{ for all } \freqbin \in [\nrfreqs]\}.
\end{equation} 
The constraint set is reasonable since (i) the function $A\{\optvar\}$ is only finite if $\optvar[\freqbin] \!\succ\! \mathbf{0}$ and (ii) the true 
inverse SDM satisfies $\invSDM[\freqbin] \preceq \mathbf{I}$, for all $\freqbin \in [\nrfreqs]$, due to \eqref{equ_uniform_boundedness_eigvals} (with $L\!=\!1$). 

The formulation \eqref{eq:graphical_lasso_discr_time_series} involves an estimator $\ESDM[\freqbin]$ of the SDM values $\SDM(\theta_{\freqbin})$, for $\freqbin \in [\nrfreqs]$. 
While in principle any reasonable estimator could be used, we will restrict the choice to a multivariate Blackman-Tukey (BT) estimator \cite{stoi97}
\vspace*{-3mm}
\begin{equation}
\label{equ_def_BT_SDM_estimate}
\ESDM[\freqbin] = \sum_{\lagidx=-\samplesize+1}^{\samplesize-1}
w[\lagidx] \widehat{\mat{R}}[\lagidx] \exp(-j 2\pi \lagidx \theta_{\freqbin})
\vspace*{-2mm}
\end{equation}
with the standard biased autocorrelation estimate
$\widehat{\mat{R}}[\lagidx] = (1/\samplesize) \sum_{\timeidx=\lagidx+1}^{\samplesize} \vec{x}[\timeidx] \vec{x}^{T}[\timeidx-\lagidx]$ for $\lagidx = 0,\ldots,\samplesize-1$. 
Enforcing the symmetry $\widehat{\mat{R}}[-\lagidx]
=\widehat{\mat{R}}^H[\lagidx]$, we can obtain the ACF
estimate for $\lagidx = -\samplesize+1,\ldots,-1$.
The window function $w[\lagidx]$ in \eqref{equ_def_BT_SDM_estimate} is a design parameter, 
which can be chosen freely as long as it yields a psd estimate $\ESDM[\freqbin]$. A sufficient condition such that $\ESDM[\freqbin]$ is 
guaranteed to be psd is non-negativeness of the Fourier transform $W(\theta) \!\defeq\! \sum_{\lagidx} w[\lagidx] \exp(- j 2 \pi \theta \lagidx)$Ê\cite[Sec.\ 2.5.2]{stoi97}. 

The existence of a minimizer in \eqref{eq:graphical_lasso_discr_time_series} is guaranteed for any choice of $\lambda\geq 0$ as the optimization problem \eqref{eq:graphical_lasso_discr_time_series} is equivalent 
to the unconstrained problem $\min_{\optvar[\cdot]}  \!-A\{\optvar\} + \langle \ESDM[\cdot],
\optvar[\cdot] \rangle + \lambda \| \optvar[\cdot]\|_1 + I_{\mathcal{C}}(\optvar[\cdot])$, where  $I_{\mathcal{C}}(\optvar[\cdot])$ is the 
indicator function of the constraint set $\mathcal{C}$. Existence of a minimizer of this equivalent problem is guaranteed by \cite[Theorem 27.2]{RockafellarBook}: The objective function is a closed proper convex function and is finite only on the bounded set $\mathcal{C}$, which trivially implies that 
the objective function has no direction of recession \cite{RockafellarBook}. 

We will present in Sec.\ \ref{sec_perfomance_analysis} a specific choice for $\lambda$ in \eqref{eq:graphical_lasso_discr_time_series}, which ensures that the gLASSO estimator $\GLASSO[\cdot]$ is accurate, i.e., the 
estimation error $\GLASSOERR[\cdot] \defeq \GLASSO[\cdot] -  \invSDM[\cdot]$ is small. 
Based on the gLASSO \eqref{eq:graphical_lasso_discr_time_series}, an estimate for the edge set of the CIG may then be obtained by 
thresholding: 
\vspace*{-1mm}
\begin{equation}
\label{equ_def_estimated_edge_set}
%\widehat{\set{E}}(\eta) \defeq \{ (i,j): \sqrt{ (1/\nrfreqs) \sum_{\freqbin \in [\nrfreqs]}  \big| \big( \GLASSO[\freqbin]Ê\big)_{i,j}} \big|^{2} \geq \eta \}.
\widehat{\set{E}}(\eta) \defeq \{ (i,j):  \big\| \widehat{K}_{i,j}[\cdot]Ê\big\| \geq \eta \}.
\vspace*{-1mm}
\end{equation}
Obviously, under Asspt. \ref{apt_rho_min}, if
\vspace*{-1mm}
\begin{equation} 
\label{equ_condition_norm_error_below_rho_min_2}
\| \GLASSOERR \|_{1} < \rho_{\text{min}}/2, 
\vspace*{-1mm}
\end{equation}
we have $\widehat{\set{E}}( \rho_{\text{min}}/2) = \set{E}$, i.e., the CIG is recovered perfectly. 

\vspace*{-3mm}
\subsection{ADMM Implementation}

%The gLASSO in \eqref{eq:graphical_lasso_discr_time_series} involves minimizing the sum of the smooth strictly convex function $-A\{\optvar[\cdot]\} + \langle \ESDM[\cdot],
%\optvar[\cdot] \rangle $ and the non-differentiable convex function $\lambda \| \optvar[\cdot] \|_{1}$. 
An efficient numerical method for solving convex optimization problems of the type \eqref{eq:graphical_lasso_discr_time_series} is the \textit{alternating direction method of multipliers}
(ADMM). 
Defining the augmented Lagrangian \cite{DistrOptStatistLearningADMM} of the problem \eqref{eq:graphical_lasso_discr_time_series} as 
\vspace*{-1mm}
\begin{align} 
L_{\rho}(\optvar[\cdot],\secondvar[\cdot],\dualvar[\cdot]) &  \defeq -A\{\optvar\} + \langle \ESDM[\cdot] , \optvar[\cdot] \rangle+ \lambda \| \secondvar \|_{1}  \nonumber\\[-0mm]Ê
& \hspace*{5mm} + (\rho/2) \|\optvar[\cdot]-\secondvar[\cdot]  + \dualvar[\cdot] \|_{\rm{F}}^{2}, \nonumber \\[-5mm]
\nonumber
\end{align}
the (scaled) ADMM method iterates, starting with an arbitrary initial guess for $\optvar^{0}[\cdot]$, $\secondvar^{0}[\cdot]$ and $\dualvar^{0}[\cdot]$, the following update rules 
\begin{align}
\optvar^{k+1}[\cdot] & = \argmin_{\optvar[\cdot] \in \mathcal{C}} L_{\rho}(\optvar[\cdot], \secondvar^{k}[\cdot],\dualvar^{k}[\cdot]) \label{equ_def_primal_update_ADMM}\\ 
\secondvar^{k+1}[\cdot] & = \argmin_{\secondvar[\cdot] \in \mathcal{H}_{\coefflen}^{[\nrfreqs]}} L_{\rho}(\optvar^{k+1}[\cdot], \secondvar[\cdot],\dualvar^{k}[\cdot]) \label{equ_def_second_update_ADMM}\ \\ 
\dualvar^{k+1}[\cdot] &= \dualvar^{k}[\cdot] + (\optvar^{k+1}[\cdot] - \secondvar^{k+1}[\cdot]) \label{equ_def_dual_update_ADMM}. 
\end{align} 
It can be shown (cf.\ \cite[Sec. 3.2]{DistrOptStatistLearningADMM}) that for any $\rho>0$, the iterates $\optvar^{k}[\cdot]$ converge to a solution of \eqref{eq:graphical_lasso_discr_time_series}. Thus, while the precise choice for $\rho$ has some influence on the convergence speed of ADMM \cite[Sec. 3.4.1]{DistrOptStatistLearningADMM}, it is not very crucial. We used $\rho\!=\!100$ in all of our experiments (cf.\ Section \ref{sec:results}).

Closed-forms for updates \eqref{equ_def_primal_update_ADMM} and \eqref{equ_def_second_update_ADMM} are stated in 
%%%%%%
\vspace*{-1mm}
\begin{proposition}
\label{prop_ADMM_rules_gLASSO}
Let us denote the eigenvalue decomposition of the matrix $\ESDM[\freqbin]\!+\!\rho(\dualvar^{k}[\freqbin] \!-\!  \secondvar^{k}[\freqbin])$ by $\mathbf{V}_{\freqbin} \mathbf{D}_{\freqbin} \mathbf{V}_{\freqbin}^{H}$ 
with the diagonal matrix $\mathbf{D}_{\freqbin}$ composed of the eigenvalues $d_{\freqbin,i}$, sorted non-increasingly. 
Then, the ADMM update rule \eqref{equ_def_primal_update_ADMM} is accomplished by setting, separately for each $\freqbin \in [\nrfreqs]$, 
\vspace*{-2mm}
\begin{equation} 
\label{equ_update_rule_close_form_optvar}
\optvar^{k+1}[\freqbin] = \mathbf{V}_{\freqbin} \widetilde{\mathbf{D}}_{\freqbin} \mathbf{V}_{\freqbin}^{H}  
\vspace*{-2mm}
\end{equation} 
with the diagonal matrix $ \widetilde{\mathbf{D}}_{\freqbin}$ whose $i$th diagonal element is given by 
$\tilde{d}_{\freqbin,i} = \min\big\{ (1/(2\rho)) \big[\!-\!d_{\freqbin,i} \!+\! \sqrt{d_{\freqbin,i}^{2} \!+\! 4 \rho}\big], 1 \big\}$. 

If we define the block-thresholding operator $\mathbf{W}[\cdot] \!=\! \mathcal{S}_{\kappa}(\mathbf{Y}[\cdot])$ via 
$W_{i,j}[\freqbin] \!\defeq\! \big(1\!-\! \kappa/ \| Y_{i,j}[\cdot] \| \big)_{+} Y_{i,j}[\freqbin]$, 
the update rule \eqref{equ_def_second_update_ADMM} results in 
\vspace*{-2mm}
\begin{equation}
\label{equ_explicit_updata_z_soft_thr}
\secondvar^{k+1}[\cdot] = \mathcal{S}_{\lambda/\rho}\big( \optvar^{k+1}[\cdot] \!+\! \dualvar^{k}[\cdot] \big).  
\vspace*{-2mm}
\end{equation} 
\end{proposition}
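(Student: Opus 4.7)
The plan is to handle the two closed-form updates separately by exploiting the separable structure of the augmented Lagrangian. For the $\optvar$-update \eqref{equ_def_primal_update_ADMM}, I would first expand $L_\rho(\optvar[\cdot],\secondvar^{k}[\cdot],\dualvar^{k}[\cdot])$ and keep only the terms involving $\optvar$, namely $-A\{\optvar\}$, $\langle\ESDM[\cdot],\optvar[\cdot]\rangle$, and the quadratic $(\rho/2)\|\optvar[\cdot]-\secondvar^{k}[\cdot]+\dualvar^{k}[\cdot]\|_{\mathrm{F}}^{2}$. Since $A\{\optvar\}$, the inner product, and the generalized Frobenius norm are all averages of independent per-frequency matrix quantities, the whole objective splits across $\freqbin\in[\nrfreqs]$ into subproblems
\[
\min_{\mathbf{0}\prec X\preceq\mathbf{I}}\;-\log|X|+\mathrm{tr}(M_{\freqbin} X)+(\rho/2)\,\mathrm{tr}(X^{2}),
\]
where the linear cross-term from the quadratic combines with $\mathrm{tr}(\ESDM[\freqbin]X)$ to give $M_{\freqbin}\defeq\ESDM[\freqbin]+\rho(\dualvar^{k}[\freqbin]-\secondvar^{k}[\freqbin])$. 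Each subproblem is strictly convex on the psd cone and so admits a unique minimizer.

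To reduce each per-frequency problem to scalars, I would note that $-\log|X|$ and $\mathrm{tr}(X^{2})$ depend only on the eigenvalues of $X$, so for any fixed spectrum of $X$ it remains only to minimize $\mathrm{tr}(M_{\freqbin} X)$ over the unitary orbit of that spectrum. By von Neumann's trace inequality, this minimum is attained when $X$ and $M_{\freqbin}$ share eigenvectors with oppositely ordered spectra. With $M_{\freqbin}=\mathbf{V}_{\freqbin}\mathbf{D}_{\freqbin}\mathbf{V}_{\freqbin}^{H}$ and $d_{\freqbin,i}$ non-increasing, the ansatz $X=\mathbf{V}_{\freqbin}\widetilde{\mathbf{D}}_{\freqbin}\mathbf{V}_{\freqbin}^{H}$ reduces the problem to the independent scalar programs $\min_{0<\tilde d\leq 1}\,-\log\tilde d+d_{\freqbin,i}\tilde d+(\rho/2)\tilde d^{2}$. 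Setting the derivative to zero gives a quadratic in $\tilde d$ whose unique positive root is $(-d_{\freqbin,i}+\sqrt{d_{\freqbin,i}^{2}+4\rho})/(2\rho)$; clipping at $1$ to enforce $X\preceq\mathbf{I}$ then yields \eqref{equ_update_rule_close_form_optvar}. I would also verify that this root is decreasing in $d_{\freqbin,i}$, so the resulting $\tilde d_{\freqbin,i}$ come out non-decreasing, consistent with the opposite-ordering requirement.

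For the $\secondvar$-update \eqref{equ_def_second_update_ADMM}, the only $\secondvar$-dependent terms in $L_{\rho}$ are $\lambda\|\secondvar\|_{1}$ and $(\rho/2)\|\optvar^{k+1}[\cdot]-\secondvar[\cdot]+\dualvar^{k}[\cdot]\|_{\mathrm{F}}^{2}$, and by the entrywise definitions of the two norms on $\mathcal{H}_{\coefflen}^{[\nrfreqs]}$ these decouple across index pairs $(i,j)$. Writing $Y_{i,j}[\cdot]\defeq\optvar^{k+1}_{i,j}[\cdot]+\dualvar^{k}_{i,j}[\cdot]$, each entrywise subproblem is the standard group-lasso proximal problem whose minimizer is the block soft-threshold $\bigl(1-(\lambda/\rho)/\|Y_{i,j}[\cdot]\|\bigr)_{+}Y_{i,j}[\cdot]$, giving \eqref{equ_explicit_updata_z_soft_thr}. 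The Hermitian symmetry of $\optvar^{k+1}$ and $\dualvar^{k}$ passes to $Y$, and entrywise block-thresholding commutes with the conjugation relating the $(i,j)$ and $(j,i)$ coefficients, so the result stays in $\mathcal{H}_{\coefflen}^{[\nrfreqs]}$ automatically.

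The step I expect to be most delicate is the eigenvector-alignment argument in the $\optvar$-update: showing that an optimizer must be simultaneously diagonalizable with $M_{\freqbin}$ with the reversed ordering of spectra, and then justifying that simply clipping the unconstrained scalar roots at $1$ (rather than invoking a full KKT analysis of the matrix inequality $X\preceq\mathbf{I}$) really produces the constrained optimum. The $\secondvar$-update is by contrast a routine block soft-thresholding calculation once the decoupling across $(i,j)$ has been made explicit.
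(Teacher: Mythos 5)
Your proposal is correct and follows essentially the same route as the paper: the per-frequency decomposition, the reduction to scalar eigenvalue programs via the trace inequality for oppositely ordered spectra (the paper cites Lasserre's Lemma~II.1, which is the von Neumann inequality you invoke), the quadratic root with clipping at $1$, and the standard group-LASSO block soft-thresholding for the $\secondvar$-update. Your additional checks (monotonicity of the root in $d_{\freqbin,i}$ to preserve the ordering, and convexity of the scalar objective to justify clipping) are details the paper glosses over but do not constitute a different argument.
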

\begin{proof}
Since the minimization problem \eqref{equ_def_second_update_ADMM} is equivalent to the ADMM update for a group LASSO problem \cite[Sec. 6.4.2]{DistrOptStatistLearningADMM}, the explicit form \eqref{equ_explicit_updata_z_soft_thr} follows from the derivation in \cite[Sec. 6.4.2]{DistrOptStatistLearningADMM}. 

Note that the optimization problem \eqref{equ_def_primal_update_ADMM} splits into $\nrfreqs$ separate subproblems, one for each $\freqbin \!\in\! [\nrfreqs]$. 
The subproblem for a specific frequency bin $\freqbin$ is (omitting the index $\freqbin$) 
\vspace*{-1mm}
\begin{equation}
\label{equ_sub_problem_fixed_freq_x_updat}
\min_{\mathbf{0} \prec \mathbf{X} \preceq  \mathbf{I}} - \log |\optvar|+ \langle \optvar,\ESDM+\rho(\dualvar^{k}-\secondvar^{k})  \rangle +  (\rho/2)\| \optvar \|_{\rm{F}}^{2}.
\vspace*{-1mm}
\end{equation}
Let us denote the non-increasing eigenvalues of the Hermitian matrices $\mathbf{X}$ and $\mathbf{Y} \!\defeq\! \ESDM \!+\! \rho(\dualvar^{k}\!-\!\secondvar^{k})$ by $x_{i}$ and $d_{i}$, for $i \!\in\! [\coefflen]$, respectively. 

According to \cite[Lemma II.1]{Lasserre95}, we have the trace inequality  $\langle \optvar,\mathbf{Y}  \rangle \!\geq\! \sum_{i \in [\coefflen]} x_{i} d_{\coefflen-i-1}$ with equality 
if $\mathbf{X}$ is of the form $\mathbf{X} \!=\! \mathbf{V} \mbox{diag} \{ d_{\coefflen-i-1} \}\mathbf{V}^{H}$ with a unitary matrix $\mathbf{V}$ containing eigenvectors of $\mathbf{Y}$. %Thus, we can transform \eqref{equ_sub_problem_fixed_freq_x_updat} into the equivalent problem  
%\vspace*{-2mm}
%\begin{equation} 
%\label{equ_equiv_problem_eigvals_sorted}
%\min_{\substack{i'>i \Rightarrow x_{i'} \leq x_{i} \\ 0< x_{i} \leq 1}} \sum_{iÊ\in [\coefflen]} -\log x_{i} + x_{i} d_{\coefflen-i+1}  + (\rho/2) x_{i}^{2}. 
%\vspace*{-1mm}
%\end{equation} 
Based on this trace inequality, a lower bound on the minimum in \eqref{equ_sub_problem_fixed_freq_x_updat} % and explicitly state a sequence $\tilde{x}_{i}$ achieving this lower bound. 
%In particular, removing in \eqref{equ_equiv_problem_eigvals_sorted} the requirement of the $x_{i}$ being ordered non-increasingly, the following convex optimization problem yields the desired lower bound: 
is given by 
\vspace*{-1mm}
\begin{equation} 
\label{equ_equiv_problem_eigvals_sorted_removed}
\min_{ 0< x_{i} \leq 1} \sum_{iÊ\in [\coefflen]} -\log x_{i} + x_{i} d_{\coefflen-i+1}  + (\rho/2) x_{i}^{2}.
\vspace*{-1mm}
\end{equation} 
The minimum in \eqref{equ_equiv_problem_eigvals_sorted_removed} is achieved by the choice $\tilde{x}_{i} \!=\! h(d_{\coefflen-i+1})$ with $h(z) \!\defeq\! \min\{(- z + \sqrt{z^2 + 4\rho})/2\rho,1 \}$. For the choice $\mathbf{X} \!=\! \mathbf{V} \diag\{\tilde{x}_{i}\} \mathbf{V}^{H}$ (which is \eqref{equ_update_rule_close_form_optvar}), the objective function in \eqref{equ_sub_problem_fixed_freq_x_updat} achieves the lower bound \eqref{equ_equiv_problem_eigvals_sorted_removed}, certifying optimality.%Since 
%the function $\nu(\cdot):\mathbb{R} \rightarrow \mathbb{R}$ is monotonically non-increasing and the values $d_{\coefflen-i+1}$ are non-decreasing (with increasing index $i$), the solution $\tilde{x}_{i}$ of \eqref{equ_equiv_problem_eigvals_sorted_removed} is non-increasing and therefore 
%feasible for \eqref{equ_sub_problem_fixed_freq_x_updat}, which means that $\tilde{x}_{i}$ is also simultaneously a solution to \eqref{equ_sub_problem_fixed_freq_x_updat}. 
\vspace*{-3mm}
\end{proof} 
We summarize our GMS method in 
\vspace*{-2mm}
\begin{algorithm}
\label{algo_GMS_GLASSO}
Given samples $\vx[1],...,\vx[N]$, parameters $T$, $\nrfreqs$, $\eta$, $\lambda$ and window function $w[\lagvar]$ perform the steps:

{\bf Step 1: }
For each $\freqbin \in [\nrfreqs]$, compute the SDM estimate $\ESDM[\freqbin]$ according 
to \eqref{equ_def_BT_SDM_estimate}. % and set $\mZ^{(\f)} = \ESDM^{1/2}(\theta_f)$. 

{\bf Step 2:}
Compute gLASSO $\GLASSO[\cdot]$ (cf.\ \eqref{eq:graphical_lasso_discr_time_series}) by iterating \eqref{equ_update_rule_close_form_optvar}, \eqref{equ_explicit_updata_z_soft_thr} and \eqref{equ_def_dual_update_ADMM} for a fixed number $T$. 

{\bf Step 3:}
Estimate the edge set via $\widehat{\set{E}}(\eta)$ (cf.\ \eqref{equ_def_estimated_edge_set}).  

\end{algorithm}

\vspace*{-5mm}
\subsection{Performance Analysis}
\label{sec_perfomance_analysis} 
\vspace*{-2mm}

Let us for simplicity assume that the ADMM iterates $\optvar^{k}[\cdot]$ converged perfectly to the gLASSO estimate $\GLASSO[\cdot]$ given by \eqref{eq:graphical_lasso_discr_time_series}. 
Under Asspt. \ref{apt_rho_min}, a sufficient condition for our GMS method to succeed is \eqref{equ_condition_norm_error_below_rho_min_2}.  

We will now derive an upper bound on the probability that \eqref{equ_condition_norm_error_below_rho_min_2} fails to hold. This will be accomplished 
by (i) showing that the norm $ \| \GLASSOERR \|_{1}$ can be bounded in terms of the SDM estimation error $\mathbf{E}[\freqbin] \!\defeq\! \SDM[\freqbin] \!-\!\ESDM[\freqbin]$ and (ii) controlling 
the probability that the error $\mathbf{E}[\freqbin]$ is sufficiently small. 

\emph{Upper bounding $\| \GLASSOERR \|_{1}$.}
By definition of $\GLASSO[\cdot]$ (cf. \eqref{eq:graphical_lasso_discr_time_series}), 
\vspace*{-1mm}
\begin{equation}
\label{equ_minimizer_GLASSO_cond1}
\hspace*{-2mm} -A\{\GLASSO\} \!+\! \langle \GLASSOERR[\cdot], \ESDM[\cdot] \rangle \!+\! \lambda (\| \GLASSO \|_{1}\!-\!\| \invSDM\|_{1}) \!\leq\! -A\{\invSDM\}. 
\vspace*{-1mm}
\end{equation}
Combining with  
%Using
%An elementary calculation yields 
%\begin{equation}
%\label{equ_minimizer_likelihood_no_penalty}
$\argmin_{\optvar[\cdot]  \in \mathcal{C}}  - A\{\optvar\} \!+\!  \langle \optvar[\cdot], \SDM[\cdot] \rangle \!=\! \invSDM[\cdot]$, 
%\end{equation}
\vspace*{-2mm}
\begin{align}
\label{equ_minimizer_GLASSO_cond2}
 \lambda \| \GLASSO \|_{1} \leq \lambda \| \invSDM \|_{1} +  \langle  \GLASSOERR[\cdot], \mathbf{E}[\cdot] \rangle.
 \vspace*{-2mm}
\end{align}

\vspace*{-1mm}
Let us, for the moment, assume validity of the condition 
\vspace*{-1mm}
\begin{equation}
\label{equ_cond_max_err_lambda_2}
E \defeq \max_{\freqbin \in [\nrfreqs]} \| \mathbf{E}[\freqbin] \|_{\infty} \leq \lambda/2, 
\vspace*{-2mm}
\end{equation} 
implying $| \langle  \GLASSOERR[\cdot], \mathbf{E}[\cdot] \rangle|  \! \leq \! ( \lambda/2)\|  \GLASSOERR\|_{1}$ and, in turn via \eqref{equ_minimizer_GLASSO_cond2}, 
\vspace*{-1mm}
\begin{equation}
\label{equ_minimizer_GLASSO_sparse_1}
 \lambda \| \GLASSO \|_{1} \leq \lambda \| \invSDM \|_{1} \!+\! (\lambda/2) \|  \GLASSOERR \|_{1}.
 \vspace*{-1mm}
 \end{equation} 
 Applying the (reverse) triangle inequality on both sides, 
 \vspace*{-2mm}
 \begin{equation} 
  \|  \GLASSOERR_{\mathcal{S}^{c}}\|_{1} \stackrel{(a)}{=} \| \GLASSO_{\mathcal{S}^{c}} \|_{1} \leq 3 \|  \GLASSOERR_{\mathcal{S}}\|_{1}, 
 \vspace*{-1mm}
 \end{equation} 
 where $(a)$ is due to $\mathcal{S} = \gsupp (\invSDM[\cdot])$. 
Thus, the estimation error $\GLASSOERR[\cdot]$ tends to be sparse. 
 
 %In order to derive also an upper bound on the norm $\| \GLASSOERR(\cdot)\|_{1}$, 
As a next step, we rewrite \eqref{equ_minimizer_GLASSO_cond1} as 
\vspace*{-1mm}
 \begin{equation}
 \label{equ_miniizer_gLASSO_bound_norm_1}
 -(A\{\GLASSO\}\!-\!A\{\invSDM\})\!+\!\langle \GLASSOERR[\cdot], \SDM[\cdot] \rangle \leq  (3\lambda/2) \| \GLASSOERR \|_{1}  
 \vspace*{-1mm}
 \end{equation} 
where we used \eqref{equ_cond_max_err_lambda_2}. 
Let us denote the eigenvalues of the psd matrix  ${\bm \Gamma}[\freqbin] \defeq \SDM^{1/2}[\freqbin]\GLASSOERR[\freqbin]\SDM^{1/2}[\freqbin]$ by $\gamma_{\freqbin,i}$. 
We can then reformulate the LHS of \eqref{equ_miniizer_gLASSO_bound_norm_1} as 
\vspace*{-2mm}
\begin{equation}
% \label{equ_miniizer_gLASSO_bound_norm_2}
 \hspace*{-3mm}-(A\{\GLASSO\}\!-\!A\{\invSDM\})\!+\!\langle \GLASSOERR[\cdot], \SDM[\cdot] \rangle % \nonumber \\Ê
% & = \int_{\theta}- \log \big| \mathbf{I} +{\bm \Gamma}(\theta)  \big| +  \rm{tr} \{{\bm \Gamma}(\theta) \} d \theta\nonumber \\Ê
 \!=\!  \frac{1}{\nrfreqs} \sum_{\freqbin,i} \hspace*{-1mm}-\log(1\!+\!\gamma_{\freqbin,i})\!+\!\gamma_{\freqbin,i}. \nonumber
 \vspace*{-3mm}
\end{equation} 
Since $\GLASSO[\freqbin], \invSDM[\freqbin] \preceq \mathbf{I}$ and $\SDM[\freqbin] \preceq U\mathbf{I}$ (cf. \eqref{equ_def_constraint_set_C}, \eqref{equ_uniform_boundedness_eigvals}), we have $\gamma_{\freqbin,i} \leq 2U$. 
Using $\log(1\!+\!x)\!=\!x\!-\!\frac{x^{2}}{2(1\!+\!\varepsilon x)^{2}}$, with some $\varepsilon \in [0,1]$, we further obtain 
\vspace*{-2mm}
\begin{equation}
 \label{equ_miniizer_gLASSO_bound_norm_3}
 \hspace*{-3mm}-(A\{\GLASSO\}\!-\!A\{\invSDM\})\!+\!\langle \GLASSOERR[\cdot], \SDM[\cdot] \rangle \geq 
% & \geq \int_{\theta} \sum_{i \in [\coefflen]} \frac{\nu^{2}_{i}(\theta)}{2(1+\nu^{2}_{i}(\theta)} d \theta \nonumber \\Ê
 %& \stackrel{\nu_{i}(\theta)\leq U}{\geq}  \int_{\theta}  \sum_{i \in [\coefflen]}  \frac{\nu^{2}_{i}(\theta)}{4U^2} d \theta =  
\|{\bm \Gamma}[\cdot]  \|_{\rm F}^{2}/(4(2U\!+\!1)^2). \nonumber
\vspace*{-2mm}
\end{equation} 
Applying \cite[Lemma 4.3.1 (b)]{Horn91} to the RHS,   
%\eqref{equ_miniizer_gLASSO_bound_norm_3} yield  
\vspace*{-1mm}
\begin{align}
\label{equ_miniizer_gLASSO_bound_norm_4} 
 &-(A\{\GLASSO\}\!-\!A\{\invSDM\})\!+\!\langle \GLASSOERR[\cdot], \SDM[\cdot] \rangle \\Ê
 & \hspace*{-3mm} \!\geq\! \frac{1}{4 \nrfreqs (2U\!+\!1)^2} \hspace*{-1mm}\sum_{\freqbin \in [\nrfreqs]} \hspace*{-2mm} \vectorize\{\GLASSOERR[\freqbin]\}^{H} (\bar{\SDM}[\freqbin] \otimes \SDM[\freqbin])\vectorize\{\GLASSOERR[\freqbin]\}. \nonumber \\[-8mm]
 \nonumber
\end{align} 
Combining \eqref{equ_miniizer_gLASSO_bound_norm_4} with \eqref{equ_miniizer_gLASSO_bound_norm_1} and Asspt. \ref{equ_asspt_compat_cond}, we arrive at % we finally arrive at the following upper bound on the norm of the gLASSO error:
\vspace*{-2mm}
\begin{equation}
\label{equ_final_upper_bound_glasso_error_norm}
 \| \GLASSOERR \|_{1} \leq 96 (2U\!+\!1)^2 (\sparsity/\phi) \lambda.
 \vspace*{-1mm}
\end{equation} 

\vspace*{-1mm}
\emph{Controlling the SDM estimation error.}
%In the derivation of the error bound \eqref{equ_final_upper_bound_glasso_error_norm}, we assume that \eqref{equ_cond_max_err_lambda_2} is valid. We now 
%state upper bound on the probability that \eqref{equ_cond_max_err_lambda_2} is indeed satisfied. Consider the SDM estimator in \eqref{equ_def_BT_SDM_estimate} based on the
It remains to control the probability that condition \eqref{equ_cond_max_err_lambda_2} is valid, i.e., the SDM estimation error $\mathbf{E}[\freqbin]$ incurred by 
the BT estimator \eqref{equ_def_BT_SDM_estimate} is sufficiently small. 
%window function $w[\cdot]$ which incurs the estimation error $\mathbf{E}(\theta) = \ESDM(\theta)- \SDM(\theta)$. 
According to \cite{CSGraphSelJournal}, for any $\nu \in [0,1/2)$,  
\vspace*{-1mm}
\begin{equation}
\label{equ_upper_bound_fail_event}
\PR\{ E \! \geq\! \nu + \mu_{x}^{(h_{1})} \} \leq 2e^{-\frac{(1/32)\samplesize \nu^2}{\| w[\cdot] \|^{2}_{1} U^2}  +2\log (2\coefflen \samplesize)  }.
\vspace*{-1mm}
\end{equation}
where $\mu_{x}^{(h_{1})}$ is the ACF moment \eqref{equ_def_ACF_moment} with $h_{1}[\lagvar]\!\defeq\! |1\!-\!w[\lagvar](1\!-\!|\lagvar|/ \samplesize)|$  for $|\lagvar|\!<\! \samplesize$ and $h_{1}[\lagvar]\!\defeq\!1$ else.

%\begin{equation}
%\label{equ_def_weight_func_lem_large_deviations_SDM_est}
%h_{1}[\lagvar] \defeq \begin{cases} \left| 1- w[\lagvar](1-|\lagvar|/ \samplesize) \right|  & \mbox{ for } |\lagvar| < \samplesize  \\ \hspace*{14mm} 1 & \mbox{ else.} \end{cases}
%\end{equation} 

\emph{The main result.} %As mentioned above, if the gLASSO estimation error satisfies \eqref{equ_condition_norm_error_below_rho_min_2} the true CIG an be obtained perfectly via thresholding the 
%gLASSO estimate yielding the set $\widehat{\set{E}}$ (cf.\ \eqref{equ_def_estimated_edge_set}). 
Recall that a sufficient condition for $\widehat{\set{E}}(\rho_{\text{min}}/2)$, given by \eqref{equ_def_estimated_edge_set}, to coincide with the true edge set is \eqref{equ_condition_norm_error_below_rho_min_2}. 
Under the condition \eqref{equ_cond_max_err_lambda_2}, implying validity of \eqref{equ_final_upper_bound_glasso_error_norm}, the inequality \eqref{equ_condition_norm_error_below_rho_min_2} will be satisfied if $\lambda$ 
is chosen as 
\vspace*{-1mm}
\begin{equation}
\label{equ_choice_lambda}
\lambda = \phi \rho_{\text{min}}/ (192 \sparsity (2U\!+\!1)^{2}).
\vspace*{-1mm}
\end{equation}
Using \eqref{equ_upper_bound_fail_event} to bound the probability for \eqref{equ_cond_max_err_lambda_2} to hold yields
\begin{proposition} 
Consider a stationary Gaussian zero-mean time series $\vx[\timeidx]$ satisfying \eqref{equ_uniform_boundedness_eigvals} and Asspt. \ref{apt_rho_min}-\ref{equ_asspt_compat_cond}.
%whose SDM $\SDM(\theta)$ and CIG $\mathcal{G}_{x}$ satisfies \eqref{equ_uniform_boundedness_eigvals} and \eqref{equ_sparsity_codition}, respectively. 
%Moreover, we assume that the minimum partial spectral correlation satisfies $\rho_{x} \geq \rho_{\emph{min}}$. 
%We compute the set $\widehat{\set{E}}(\eta)$ in \eqref{equ_def_estimated_edge_set} based on the minimizer of the problem \eqref{eq:graphical_lasso_discr_time_series} using the BT estimator \eqref{equ_def_BT_SDM_estimate} and the 
%choice \eqref{equ_choice_lambda}. 
Then, using the choice \eqref{equ_choice_lambda} in \eqref{eq:graphical_lasso_discr_time_series} and if the conditions 
\vspace*{-2mm}
\begin{align} 
\label{equ_cond_ACF_moment}
\mu_{x}^{(h_{1})} & \leq \phi \rho_{\emph{min}}/ (384 \sparsity (2U\!+\!1)^{2}) \\ 
\label{equ_main_result_suff_cond_sample_size}
\samplesize & \geq  10^8 (2U\!+\!1)^6 \sparsity^2 \phi^{-2} \rho_{\emph{min}}^{-2}   \|w[\cdot]\|_{1}^{2} \log(4 \samplesize p^2/\delta) 
\vspace*{-2mm}
\end{align} 
%and 
%\begin{equation}
%\end{equation} 
are satisfied, we have $\PR \{ \widehat{\set{E}}(\rho_{\emph{min}}/2) \neq \set{E} \} \leq \delta$.
\end{proposition}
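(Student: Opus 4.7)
The plan is to assemble the pieces already developed in this subsection---essentially a matter of bookkeeping, with no new analytic content. By \eqref{equ_condition_norm_error_below_rho_min_2}, correct recovery is guaranteed whenever $\| \GLASSOERR \|_{1} < \rho_{\text{min}}/2$, so the failure event $\{ \widehat{\set{E}}(\rho_{\text{min}}/2) \neq \set{E}\}$ is contained in $\{ \| \GLASSOERR \|_{1} \geq \rho_{\text{min}}/2\}$, which is where Asspt.~\ref{apt_rho_min} enters.

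The first move is purely deterministic. On the event $\{E \leq \lambda/2\}$, bound \eqref{equ_final_upper_bound_glasso_error_norm}---whose derivation already invokes the sparsity assumption and the compatibility condition \ref{equ_asspt_compat_cond}---applies, and plugging in the prescribed value $\lambda = \phi \rho_{\text{min}}/(192 \sparsity (2U+1)^{2})$ from \eqref{equ_choice_lambda} collapses it to $\| \GLASSOERR \|_{1} \leq \rho_{\text{min}}/2$. Hence the failure event is sandwiched inside $\{ E > \lambda/2 \}$, and the task reduces to showing $\PR\{E > \lambda/2\} \leq \delta$.

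The second move invokes the concentration bound \eqref{equ_upper_bound_fail_event}. I would split the budget $\lambda/2$ into a bias term $\mu_{x}^{(h_{1})}$ and a fluctuation $\nu$, choosing $\nu$ so that $\nu + \mu_{x}^{(h_{1})} \leq \lambda/2$; the smallness condition \eqref{equ_cond_ACF_moment} guarantees that $\mu_{x}^{(h_{1})}$ is at most (a constant fraction of) $\lambda$, leaving $\nu$ proportional to $\lambda$. Substituting the explicit value of $\lambda$ and using the crude bound $U \leq 2U+1$, the requirement that the right-hand side of \eqref{equ_upper_bound_fail_event} not exceed $\delta$ rearranges into a lower bound on $\samplesize$ of exactly the shape \eqref{equ_main_result_suff_cond_sample_size}, with the universal constant $10^{8}$ comfortably absorbing all the accumulated multiplicative factors and logarithmic slack.

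No step presents a genuine conceptual obstacle: the hard work---the convex-analytic derivation of \eqref{equ_final_upper_bound_glasso_error_norm} via the compatibility condition and the expansion $\log(1+x) = x - x^{2}/(2(1+\varepsilon x)^{2})$, together with the concentration estimate \eqref{equ_upper_bound_fail_event} for the Blackman--Tukey SDM estimator---has been carried out above. The only thing left to double-check is that the explicit numerical constants $192$, $384$ and $10^{8}$ in the statement are generous enough that the chain of deterministic inequalities and the concentration bound close up with room for the logarithmic factors; this is the mildest of arithmetic exercises.
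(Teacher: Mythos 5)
Your proposal is correct and follows essentially the same route as the paper, whose own argument is precisely this assembly: reduce failure to $\|\GLASSOERR\|_{1} \geq \rho_{\min}/2$ via Assumption~\ref{apt_rho_min}, apply the deterministic bound \eqref{equ_final_upper_bound_glasso_error_norm} with the choice \eqref{equ_choice_lambda} on the event $\{E \leq \lambda/2\}$, and control $\PR\{E > \lambda/2\}$ by splitting the budget into the bias $\mu_x^{(h_1)}$ (handled by \eqref{equ_cond_ACF_moment}) and a fluctuation $\nu$ (handled by \eqref{equ_upper_bound_fail_event} together with the sample-size condition \eqref{equ_main_result_suff_cond_sample_size}). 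The only wrinkle---inherited from the paper rather than introduced by you---is that \eqref{equ_cond_ACF_moment} as stated allots the full $\lambda/2$ budget to $\mu_x^{(h_1)}$, so one must in fact take $\mu_x^{(h_1)}$ a strictly smaller fraction of $\lambda$ to leave $\nu$ proportional to $\lambda$; the generous constant $10^8$ absorbs this adjustment.
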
 
In order to satisfy the condition \eqref{equ_cond_ACF_moment}, the window function $w[\cdot]$ in \eqref{equ_def_BT_SDM_estimate} has to be chosen as the indicator function 
for the effective support of the ACF $\ACF[\lagidx]$. Thus, the factor $  \|w[\cdot]\|_{1}^{2}$ in \eqref{equ_main_result_suff_cond_sample_size} corresponds to a scaling of the sample size with the square of the effective ACF width. 
Moreover, the sufficient condition \eqref{equ_main_result_suff_cond_sample_size} scales inversely with the square of the minimum partial spectral coherence $\rho_{\text{min}}$ 
which agrees with the scaling of the sufficient condition obtained for the neighborhood regression approach in \cite{JuHeck2014}. 

\vspace*{-2mm}
\section{Numerical results}\label{sec:results}
\vspace*{-2mm} 

\begin{figure}
\begin{center}
\psfrag{PD}[c][c][.9]{\uput{4mm}[90]{0}{\hspace{0mm}$P_{d}$}}
\psfrag{FA}[c][c][.85]{\uput{3mm}[270]{0}{\hspace{0mm}$P_{fa}$}}
\psfrag{ROC}[c][c][.9]{\uput{2.5mm}[90]{0}{}}
\psfrag{x_0}[c][c][.9]{\uput{0.3mm}[270]{0}{$0$}}
\psfrag{x_0_1}[c][c][.9]{\uput{0.3mm}[270]{0}{$0.1$}}
\psfrag{x_0_2}[c][c][.9]{\uput{0.3mm}[270]{0}{$0.2$}}
\psfrag{x_0_3}[c][c][.9]{\uput{0.3mm}[270]{0}{$0.3$}}
\psfrag{x_0_4}[c][c][.9]{\uput{0.3mm}[270]{0}{$0.4$}}
\psfrag{x_0_5}[c][c][.9]{\uput{0.3mm}[270]{0}{$0.5$}}
\psfrag{x_0_6}[c][c][.9]{\uput{0.3mm}[270]{0}{$0.6$}}
\psfrag{x_0_7}[c][c][.9]{\uput{0.3mm}[270]{0}{$0.7$}}
\psfrag{x_0_8}[c][c][.9]{\uput{0.3mm}[270]{0}{$0.8$}}
\psfrag{x_0_9}[c][c][.9]{\uput{0.3mm}[270]{0}{$0.9$}}
\psfrag{x_1}[c][c][.9]{\uput{0.3mm}[270]{0}{$1$}}
\psfrag{y_0}[c][c][.9]{\uput{0.3mm}[180]{0}{$0$}}
\psfrag{y_0_1}[c][c][.9]{\uput{0.3mm}[180]{0}{$0.1$}}
\psfrag{y_0_2}[c][c][.9]{\uput{0.3mm}[180]{0}{$0.2$}}
\psfrag{y_0_3}[c][c][.9]{\uput{0.3mm}[180]{0}{$0.3$}}
\psfrag{y_0_4}[c][c][.9]{\uput{0.3mm}[180]{0}{$0.4$}}
\psfrag{y_0_5}[c][c][.9]{\uput{0.3mm}[180]{0}{$0.5$}}
\psfrag{y_0_6}[c][c][.9]{\uput{0.3mm}[180]{0}{$0.6$}}
\psfrag{y_0_7}[c][c][.9]{\uput{0.3mm}[180]{0}{$0.7$}}
\psfrag{y_0_8}[c][c][.9]{\uput{0.3mm}[180]{0}{$0.8$}}
\psfrag{y_0_9}[c][c][.9]{\uput{0.3mm}[180]{0}{$0.9$}}
\psfrag{y_1}[c][c][.9]{\uput{0.3mm}[180]{0}{$1$}}
\psfrag{data1}[c][c][.9]{\uput{0.3mm}[0]{0}{$N\!=\!256$}}
\psfrag{data2}[c][c][.9]{\uput{0.3mm}[0]{0}{$N\!=\!128$}}
\hspace*{-2mm}\includegraphics[width=9cm,height=6cm]{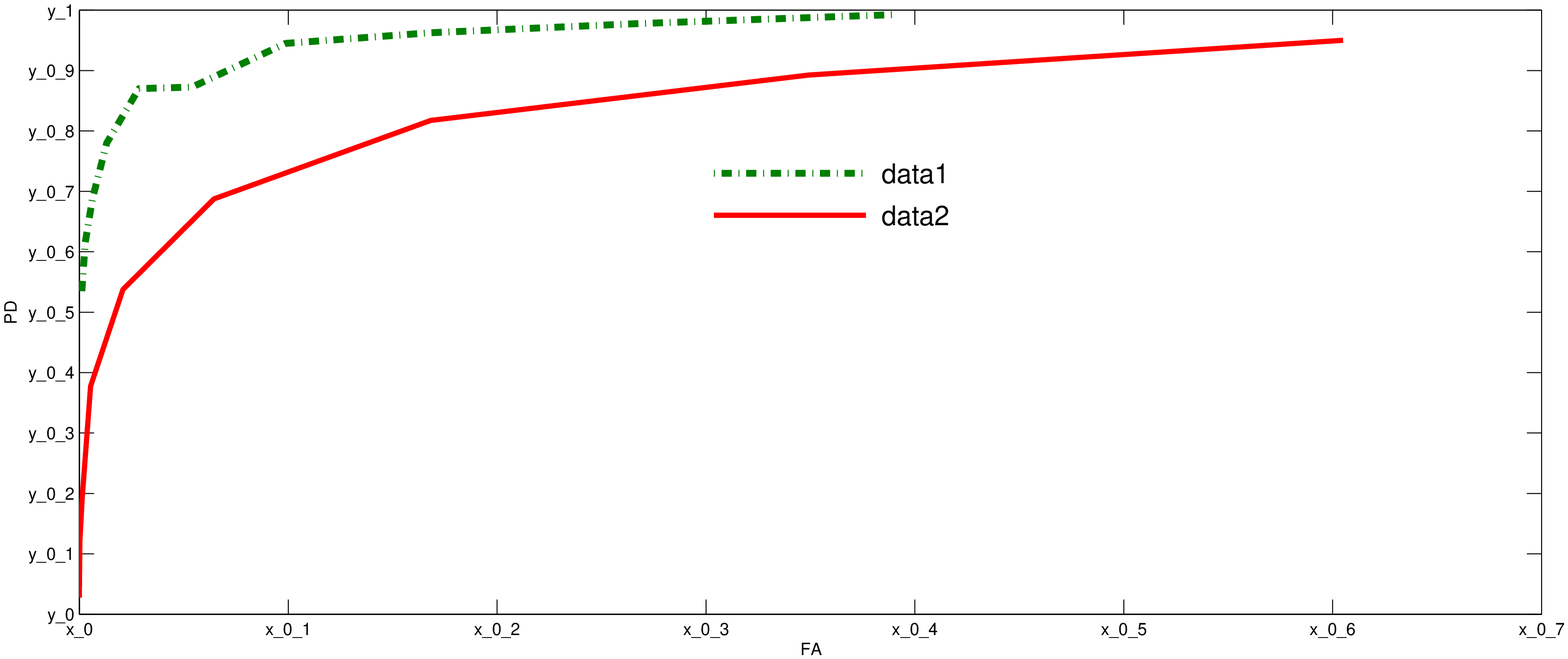}
\end{center}
\vspace*{-3mm}
  \caption{ROC curves of gLASSO based GMS method.} %%  and $\sigma^{2}=1$
\label{fig_ROC}
\vspace*{-3mm}
\end{figure}

%%%%%%%%%%%%%%%%%%%%%%%%%%%%%%%%%%%

We generated %\footnote{Matlab code to reproduce the results in this section is available at \rm{http://www.nt.tuwien.ac.at/about-us/staff/alexander-jung/}. } 
a Gaussian time series $\mathbf{x}[n]$ of dimension $p\!=\!64$ by applying a finite impulse 
response (FIR) filter $g[n]$ of length $2$ to a zero-mean, stationary, white, Gaussian noise process $\mathbf{e}[n] \sim \mathcal{N}(\mathbf{0},\mathbf{C}_{0})$. %, i.e., 
We choose the covariance matrix $\mathbf{C}_{0}$ such that the resulting CIG $\cig=([p],\set{E})$ is a star graph containing a hub node with $|\set{E}|=4$ neighbors. 
The corresponding precision matrix $\mathbf{K}_{0} = \mathbf{C}_{0}^{-1}$ has main diagonal entries equal to $1/2$ and off diagonal entries equal to $4/(10\sparsity)$.  
The filter coefficients $g[n]$ are such that the magnitude of the associated transfer function is uniformly bounded from above and below by positive constants, thereby ensuring that conditions \eqref{equ_uniform_boundedness_eigvals} and \eqref{equ_charac_edge_set_inv_SDM} are satisfied with $L\!=\!1$, $U\!=\!10$ and $\nrfreqs\!=\!4$.  
Thus, the generated time series satisfies Asspt.\ \ref{apt_rho_min} - \ref{equ_asspt_compat_cond}  with $\rho_{\text{min}}\!=\! 0.1 \sqrt{ \sum_{k\in \mathbb{Z}} r_{g}^{2}[k]}$, $\sparsity\!=\!4$ and $\phi=1$. Here, $r_{g}[m] = \sum_{m \in \mathbb{Z}} g[n+m] g[n]$ denotes the autocorrelation sequence of $g[n]$. 

Based on $\samplesize \in \{128,256\}$ observed samples, we estimated the edge set of the CIG using Algorithm \ref{algo_GMS_GLASSO} with number of ADMM iterations $L=10$, number of frequency points $\nrfreqs=4$ and the window function $w[\lagvar] \!=\! \exp(-\lagvar^2 )$. The gLASSO parameter $\lambda$ (cf. \eqref{eq:graphical_lasso_discr_time_series}) was varied in the range $[1.25,7.5]$. 

%We set $\lambda = c_{1} \rho_{\text{min}} / \sparsity$ and $c_{1}$ was varied in the range $[10^{-3},10^{3}]$. 

In Fig.~\ref{fig_ROC}, we show receiver operating characteristic (ROC) curves with the average fraction of false alarms $P_{f\!a} \!\defeq\! \frac{1}{M} \sum\limits_{i \in [M]} \frac{|\widehat{\set{E}}_{i} \setminus \set{E}| }{p(p-1)/2-|\set{E}|}$ and the average fraction of correct decisions $P_{d} \!\defeq\! \frac{1}{M} \sum\limits_{i \in [M]} \frac{|\widehat{\set{E}}_{i} \cap \set{E}|}{|\set{E}|}$ averaged over $M\!=\!100$ independent simulation runs. Here, $\widehat{\set{E}}_{i}$ denotes the estimate \eqref{equ_def_estimated_edge_set} in the $i$-th simulation run. Each point in the curves correspond to a specific value of the gLASSO parameter $\lambda$. 
  %Figure \ref{fig_ROC} shows that our CIG selection scheme yields reasonable performance for a $64$-dimensional process, even if $N=32$.

%\vspace*{-1mm}
%\section{Conclusions}\label{sec:conclusion}
%\vspace*{-1mm}
%
%We proposed and analyzed a novel GMS scheme for stationary discrete time processes or time series. 
%This scheme is based on generalizing the well-known graphical LASSO from the case 
%of i.i.d.\ observations to the case of stationary processes. The resulting model selection method is nonparametric and works also in the high-dimensional regime, 
%if the underlying CIG is sufficiently sparse. A theoretical performance analysis revealed a sufficient condition on the sample size such that the correct CIG is selected with high probability. Our analysis is based on a condition for the true SDM which is similar in spirit to the (multitask) compatibility condition for the ordinary sparse linear (multitask) model introduced in \cite{BuhlGeerBook}. We also present simulation results which illustrate the performance of our scheme on synthetic data. 

\renewcommand{\baselinestretch}{1.05}
% \footnotesize
%\bibliographystyle{IEEEbib}
%\bibliography{../../bibliography/tf-zentral,../../bibliography/work_valentin,../../bibliography/work_gabor}
%\bibliography{tf-zentral,work_gabor,LitAJ_ITC}
\bibliographystyle{IEEEbib}

\bibliography{/Users/ajung/work/LitAJ_ITC.bib,/Users/ajung/work/tf-zentral}

\begin{thebibliography}{10}

\bibitem{Gohlke2008}
J.~Gohlke, O.~Armant, F.~Parham, M.~Smith, C.~Zimmer, D.~Castro, L.~Nguyen,
  J.~Parker, G.~Gradwohl, C.~Portier, and F.~Guillemot,
\newblock ``Characterization of the proneural gene regulatory network during
  mouse telencephalon development,''
\newblock {\em BMC Biology}, vol. 6, no. 1, pp. 15, 2008.

\bibitem{DavidsonLevin2005}
E.~Davidson and M.~Levin,
\newblock ``Gene regulatory networks,''
\newblock {\em Proc. Natl. Acad. Sci.}, vol. 102, no. 14, Apr. 2005.

\bibitem{JuHeck2014}
A.~Jung, R.~Heckel, H.~B\"{o}lcskei, and F.~Hlawatsch,
\newblock ``Compressive nonparametric graphical model selection for time
  series,''
\newblock in {\em Proc. IEEE ICASSP-2014}, Florence, Italy, May 2014.

\bibitem{MeinBuhl2006}
N.~Meinshausen and P.~B{\"u}hlmann,
\newblock ``High-dimensional graphs and variable selection with the {L}asso,''
\newblock {\em Ann. Stat.}, vol. 34, no. 3, pp. 1436--1462, 2006.

\bibitem{FriedHastieTibsh2008}
J.~H. Friedmann, T.~Hastie, and R.~Tibshirani,
\newblock ``Sparse inverse covariance estimation with the graphical lasso,''
\newblock {\em Biostatistics}, vol. 9, no. 3, pp. 432--441, Jul. 2008.

\bibitem{BuhlGeerBook}
P.~B{\"u}hlmann and S.~van~de Geer,
\newblock {\em Statistics for High-Dimensional Data},
\newblock Springer, New York, 2011.

\bibitem{WittenFriedSimon2011}
D.~M. Witten, J.~H. Friedmann, and N.~Simon,
\newblock ``New insights and faster computations for the graphical lasso,''
\newblock {\em Journal of Comp. and Graph. Stat.}, vol. 20, no. 4, pp.
  892--900, 2011.

\bibitem{DistrOptStatistLearningADMM}
S.~Boyd, N.~Parikh, E.~Chu, B.~Peleato, and J.~Eckstein,
\newblock {\em {D}istributed {O}ptimization and {S}tatistical {L}earning via
  the {A}lternating {D}irection {M}ethod of {M}ultipliers}, vol.~3 of {\em
  Foundations and Trends in Machine Learning},
\newblock Now {P}ublishers, Hanover, MA, 2010.

\bibitem{DanaherGroupGLASSO2014}
P.~Danaher, P.~Wang, and D.~M. Witten,
\newblock ``The joint graphical lasso for inverse covariance estimation across
  multiple classes,''
\newblock {\em J. R. Stat. Soc. B}, vol. 76, pp. 373--397, 2014.

\bibitem{LauritzenGM}
S.~L. Lauritzen,
\newblock {\em Graphical Models},
\newblock Clarendon Press, Oxford, UK, 1996.

\bibitem{PHDEichler}
M.~Eichler,
\newblock {\em Graphical Models in Time Series Analysis},
\newblock Ph.D. thesis, Universit{\"a}t Heidelberg, 1999.

\bibitem{Dahlhaus2000}
R.~Dahlhaus,
\newblock ``Graphical interaction models for multivariate time series,''
\newblock {\em Metrika}, vol. 51, pp. 151--172, 2000.

\bibitem{GeerBuhlConditions}
Sara~A. van~de Geer and Peter B{\"u}hlmann,
\newblock ``On the conditions used to prove oracle results for the {L}asso,''
\newblock {\em Electron. J. Statist.}, vol. 3, pp. 1360 -- 1392, 2009.

\bibitem{RavWainRaskYu2011}
P.~Ravikumar, M.~J. Wainwright, and B.~Raskutti, G.~Yu,
\newblock ``High-dimensional covariance estimation by minimizing
  $\ell_{1}$-penalized log-determinant divergence,''
\newblock {\em Electronic Journal of Statistics}, vol. 5, pp. 935--980, 2011.

\bibitem{BachJordan04}
F.~R. Bach and M.~I. Jordan,
\newblock ``Learning graphical models for stationary time series,''
\newblock {\em IEEE Trans. Signal Processing}, vol. 52, no. 8, pp. 2189--2199,
  Aug. 2004.

\bibitem{whittle53}
P.~Whittle,
\newblock ``Estimation and information in time series,''
\newblock {\em Ark. Mat.}, vol. 2, pp. 423--434, 1953.

\bibitem{stoi97}
P.~Stoica and R.~Moses,
\newblock {\em Introduction to Spectral Analysis},
\newblock Prentice Hall, Englewood Cliffs (NJ), 1997.

\bibitem{RockafellarBook}
R.~T. Rockafellar,
\newblock {\em Convex Analysis},
\newblock Princeton Univ. Press, Princeton, NJ, 1970.

\bibitem{Lasserre95}
J.~B. Lasserre,
\newblock ``A trace inequality for matrix product,''
\newblock {\em IEEE Trans. Autom. Cont.}, vol. 40, no. 8, Aug. 1995.

\bibitem{Horn91}
R.~A. Horn and C.~R. Johnson,
\newblock {\em Topics in {M}atrix {A}nalysis},
\newblock Cambridge Univ. Press, Cambridge, UK, 1991.

\bibitem{CSGraphSelJournal}
A.~Jung,
\newblock ``Nonparametric compressive graphical model selection for
  vector-valued stationary random processes: A multitask learning approach,''
\newblock {\em submitted to IEEE Trans. Sig. Proc, also available at ArXiv},
  2014.

\end{thebibliography}
\end{document}